\newtheorem{theorem}{Theorem}[section]
\newtheorem{lemma}[theorem]{\bf{Lemma}}
\title{\LARGE \bf
DS-MPEPC: Safe and Deadlock-Avoiding Robot Navigation in Cluttered Dynamic Scenes
}
\author{Senthil Hariharan Arul$^{1}$, Jong Jin Park$^{2}$ and Dinesh Manocha$^{3}$
\thanks{$^{1}$Author is with Dept. of Electrical and Computer Engineering, University of Maryland, College Park, MD, USA.
        {\tt\small sarul1@umd.edu}}%
\thanks{$^{2}$Author is with Amazon Lab126, 1100 Enterprise Way, Sunnyvale, CA 94089, USA.
        {\tt\small jongpark@amazon.com}}%
\thanks{$^{3}$Author is with Dept. of Computer Science, University of Maryland, College Park, MD, USA.
        {\tt\small dmanocha@umd.edu}}%
}
\begin{document}

\maketitle
\thispagestyle{empty}
\pagestyle{empty}

\begin{abstract}
We present an algorithm for safe robot navigation in complex dynamic environments using a variant of model predictive equilibrium point control. We use an optimization formulation to navigate robots gracefully in dynamic environments by optimizing over a trajectory cost function at each timestep. We present a novel trajectory cost formulation that significantly reduces the conservative and deadlock behaviors and generates smooth trajectories. 
In particular, we propose a new collision probability function that effectively captures the risk associated with a given configuration and the time to avoid collisions based on the velocity direction. Moreover, we propose a terminal state cost based on the expected time-to-goal and time-to-collision values that helps in avoiding trajectories that could result in deadlock. We evaluate our cost formulation in multiple simulated and real-world scenarios, including narrow corridors with dynamic obstacles, and observe significantly improved navigation behavior and reduced deadlocks as compared to prior methods.

\end{abstract}

\section{Introduction}











Robots are gaining widespread use in everyday applications in the form of autonomous ride-sharing vehicles, robot vacuums, home monitoring robots, delivery robots, and urban surveillance drones. A key problem in all these applications is for the robot to move between different locations by navigating safely around static and dynamic obstacles. 

In this paper, we address the problem of computing safe paths for one or more robots in cluttered dense scenes. These include indoor pedestrian-rich environments such as homes, public places, and shopping malls, which can be particularly challenging scenes to navigate due to their highly dynamic, cluttered, and uncertain nature. First, pedestrian motion is unpredictable, and second, the robots have imperfect knowledge of their surroundings. Moreover, the robot may operate in spatially constrained regions such as narrow corridors, doorways, and spaces with multiple obstructions such as furniture, large objects, or pedestrians in the scene. Despite these challenges, the navigation algorithm must steer the robot safely, smoothly, and cooperatively around pedestrians and other obstacles.

The problem of autonomous navigation in large static or dynamic scenes has been well-studied. The main challenge is to generate trajectories that progress towards the goal, remain collision-free (safe) around obstacles, and are smooth. Many geometric or model-based methods~\cite{vo,rvo,orca,bvc} have been proposed for complex dynamic scenes consisting of one or more robots. Some model-based approaches optimize between these potentially conflicting navigation objectives, which can result in local minima or deadlocks. Thus, the robot may remain collision-free but may be deadlocked (or frozen) and unable to reach its goal successfully, even when a safe trajectory to the goal exists. Another class of methods is based on model predictive control (MPC)~\cite{mpc_orca,brito_mpcc}, which 
optimizes over a finite horizon to generate smooth, collision-free trajectories by incorporating predictions of agent and obstacle future behaviors. Its variant includes model predictive equilibrium point control (MPEPC) navigation~\cite{park_mpepc}, which formulates local navigation as a continuous, unconstrained, finite-horizon optimization problem by augmenting MPC with equilibrium point control (EPC~\cite{epc}). The underlying navigation formulation considers non-holonomic (e.g., wheelchair) dynamics and can handle static and moving obstacles to generate safe and smooth trajectories in dynamic environments. However, due to conflicting performance objectives, these methods can also result in a deadlock in certain scenarios.

\begin{figure}
    \centering
    \frame{\includegraphics[width=0.4\linewidth]{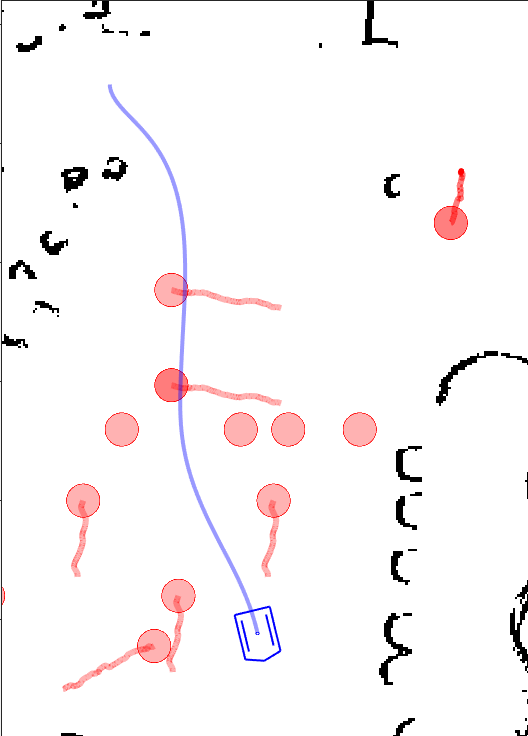}}
    \quad
    \frame{\includegraphics[width=0.42\linewidth]{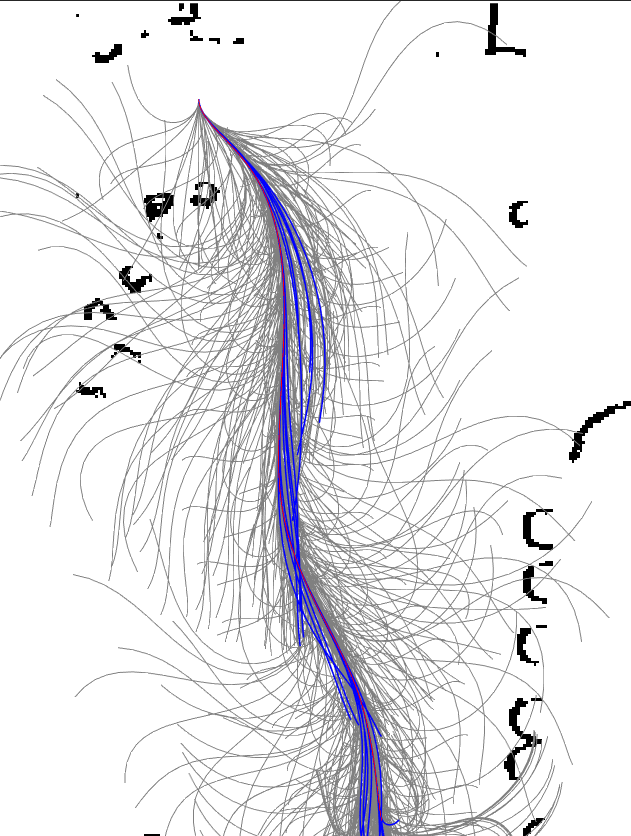}}
    \caption{An illustrative scenario showing a robot (blue) navigating a space with 12 pedestrians (red disks) with our modified cost function in real-world scenarios captured using sensors. Black regions denote static obstacles/ (Left) The red traces show the trajectory followed by the pedestrians over the past few timesteps computed using DS-MPEPC, and the blue trace shows the actual path followed by the agent. (Right) The figure illustrates the evaluated trajectories in this scenario, which are represented in gray. The blue trajectories are the optimal trajectory at each timestep.}
    \label{fig:my_label}
\end{figure}

Recently, learning-based methods have been used for navigation in dynamic environments. They can handle noisy sensor data and can work well in some environments.
Despite these advantages, learning-based methods lack safety guarantees and are non-explainable. 
Therefore, model-based approaches, owing to their safety and explainability, are still desirable for real-life navigation applications, though we need better methods to navigate in challenging scenarios. 

\subsection{Main Contributions}
In this paper, we present DS-MPEPC, an improved MPEPC-based ~\cite{park_mpepc} navigation algorithm. Our approach is designed to improve the performance in terms of reducing conservativeness and avoiding deadlocks (or to unfreeze), thereby resulting in improved navigation behavior. We present a modified trajectory cost formulation that improves the navigation in terms of reducing deadlocks, allowing agents to navigate a narrow passages, and can be used for multi-agent navigation in dynamic environments. The novel components of our work include:  

\begin{enumerate}
    \item A new collision probability formulation that captures the risk associated with a configuration state and the time available for the robot to avoid an impending collision. Our formulation is less conservative in terms of assigning a collision probability to a trajectory segment and results in improved navigation performance.

    \item A novel terminal cost term based on expected time-to-goal value. The terminal cost term preferentially chooses safe trajectories that reduce deadlocking behavior.

    \item We prove that optimizing against the modified cost function maintains the safety conditions that prevent the robot in a collision state from moving (Lemma~\ref{lemma:3}).
\end{enumerate}

We evaluate the proposed formulation in a variety of simulated scenarios with both static, dynamic, and multi-agent cases. We also simulate the robot motion in an environment generated from real-world sensor data as in MPEPC~\cite{park_mpepc} and demonstrate the benefits. The overall approach is fast and works well on challenging scenarios.


\section{Related Works}


\subsection{Navigation in Dynamic Environments}
Potential field methods~\cite{pfm} navigate robots by adding a repulsion field around obstacles and attracting them towards the goal but can suffer from local minima issues, oscillations, etc. There is considerable work on developing model-based methods based on velocity obstacles~\cite{vo} and their variants~\cite{rvo,orca}. VO~\cite{vo} presents an efficient method that defines a set of relative velocities between pairs of agents that cause collisions. At each time step, a velocity outside the VO is chosen for collision avoidance. RVO~\cite{rvo} extends the VO concept by assuming agents share equal responsibility for collision avoidance. In the ORCA algorithm~\cite{orca}, the RVO constraints are linearized to reduce the collision avoidance problem to that of linear programming. VO and their variants have shown good behavior in dynamic scenarios and are widely used. However, they generally consider simple velocity-controlled disk-shaped or elliptical~\cite{best2016real} agents, and constructing VO for arbitrarily shaped obstacles is non-trivial or results in very conservative behavior. In addition, planning a velocity at each time step makes generating smooth, acceleration-limited paths difficult. Furthermore, they can be overly conservative in environments in cluttered scenes~\cite{he2017efficient}. With ORCA, the linear constraints regard a half-space as invalid, and the linear program can be infeasible even for a small number of nearby obstacles~\cite{orca}. 

The VO concept was extended to a variety of agent dynamics, including double integrators~\cite{avo}, linear agents~\cite{lqr,lqg}, differential drive agents~\cite{orca_dd}, etc. However, these modifications either rely on augmenting the bounding geometry of the agents or linearizing the non-convex VO shape causing the formulation to be overly conservative. DWA~\cite{dwa} plans to avoid collisions in the velocity space. They are shown to be successful in low-speed scenarios but produce highly reactive behavior~\cite{brito_mpcc}. Buffered Voronoi cell (BVC)~\cite{bvc} presents an efficient method of computing collision-free trajectories by reasoning in the position space. Other techniques for collision avoidance evaluate trajectories against a cost function to optimize for a navigation plan between multiple agents based on a time-to-collision model~\cite{nh-ttc,cglr}. Inevitable collision states (ICS)~\cite{ics} computes a set of states that have no collision-free trajectories for an infinite time horizon. Though ICS provides a theoretical guarantee on collision avoidance, they are very conservative and could regard the entire workspace as forbidden. Trautman et al.~\cite{trautman} develop a probabilistic predictive model of cooperation and its importance for safe and efficient navigation in human crowds.

Model predictive control (MPC) has been used to generate smooth collision-free trajectories that show predictive navigation behavior. Cheng et al.~\cite{mpc_orca} employ ORCA constraints in an MPC framework, which reduces the velocity vibrations compared to ORCA. The feasible velocity set is defined by the ORCA constraints and can be infeasible due to ORCA's conservativeness. Brito et al.~\cite{brito_mpcc} present a model predictive contouring control (MPCC) frame that assumes a reference path and computes convex constraints on free space with predicted agent behaviors for generating local trajectories. Computing convex free space for arbitrary environments can be non-trivial and also conservative. Park et al.~\cite{park_mpepc} propose an MPEPC navigation framework that solves a finite horizon, unconstrained optimization to generate a smooth local trajectory.  
Optimizing for maximum progress at each timestep may not eventually lead the robot to its goal as the path can be obstructed by an obstacle like a piece of wall, and the robot gets deadlocked. The robot may need to move away or make limited progress toward the goal at certain timesteps to reach the goal eventually.

\subsection{Learning-based Navigation}
Recently, learning-based methods~\cite{cadrl,long} have been used for navigation in real-world dynamic scenes. In practice, they can handle sensor uncertainty in terms of better time-to-goal and success rate. DenseCAvoid~\cite{densecavoid} uses reinforcement learning and trajectory predictions to generate smooth, collision-free trajectories. Arpino et al.~\cite{rl_pednav} propose an RL-based method for robot navigation among pedestrians in real-life indoor environments. One major challenge is the lack of explainability and collision-free guarantees with such learning techniques. 

\subsection{Deadlock Resolution}
Deadlocks happen when the robot halts before reaching its goal and can be caused by local minimum decisions in the navigation problem due to a finite planning horizon. Frequently deadlocks are resolved using some heuristic rule. In BVC~\cite{bvc}, the robot detours along the edges of the Voronoi cell to resolve deadlocks. In V-RVO~\cite{vrvo}, the method proposed a simple communication-based strategy for deadlock resolution. In~\cite{wallfollowing}, the planner detects a deadlock and performs wall following to resolve deadlocks. In this paper, we define a terminal state cost function based on the time-to-goal and time-to-collision values, which shows deadlock-resolving behavior in our test scenarios.



\section{Background}\label{sec:3}
In this section, we briefly describe the MPEPC navigation~\cite{park_mpepc} algorithm. We also give an overview of prior work on smooth control law~\cite{park_smoothlaw}. 

\subsection{Assumptions}

We consider a differential-drive robot navigating complex environments with static and dynamic obstacles. The static environment information is available as an occupancy map, and we assume the current position and velocity data can be estimated for dynamic obstacles (like pedestrians). In addition, we use a constant velocity model to estimate the future positions of dynamic obstacles over the planning horizon.  Our approach can be used for simulated environments as well as real-world datasets captured using visual sensors.

\begin{table}[]
    \centering
    \begin{tabular}{|c|c|}
        \hline
        $p_c$ & Collision Probability \\
        \hline
        $p_s$ & Survivability\\
        \hline
        $v_i$ & Velocity at time step $i$\\
        \hline
        $v_{max}$ & Max. velocity\\
        \hline
        $d_o$ & Distance to closest obstacle\\
        \hline
        $d_g$ & Distance to goal \\
        \hline
        $h$ & Time step \\
        \hline
        $TTC$ & Time to collision \\
        \hline
        $TTG$ & Time to goal\\
        \hline
        $T$ & Target goal \\
        \hline
        $r$ & Robot's distance from T\\
        \hline
        $los$ & Line of sight from robot to T\\
        \hline
        $\theta$ & Orientation of T relative to $los$\\
        \hline
        $\delta$ & Orientation of robots pose with $los$\\
        \hline
        $z*$ & Trajectory parameter $(r,\theta,\delta,v_{max})$\\
        \hline
        $N$ & Terminal timestep\\
        \hline
        $\omega$ & Angular velocity\\
        \hline
    \end{tabular}
    \caption{Symbols and Notation}
    \label{tab:notation}
\end{table}

\subsection{Smooth Control Law}\label{sec:smoothlaw}

Park and Kuipers~\cite{park_smoothlaw} define an ego-centric coordinate system relating the robot's current pose with its target goal pose $T$. Consider a goal configuration $T$ at a distance $r$ away from the robot, let $\theta$ represent the orientation of $T$ relative to the line of sight from the robot to the target, and let $\delta$ define the orientation of the robot's current configuration with the line of sight. The triplet $(r, \theta, \delta)$ defines an ego-centric coordinate system describing the robot's current pose with its target goal pose.

Further, the authors define a non-linear pose-following control law that globally drives a robot towards $T$. Here, $k_1$ and $k_2$ define gain parameters. 
$$
\omega = -\frac{v}{r} [ k_2 (\delta - arctan(-k_1\theta) + (1 + \frac{k_1}{1 + (k_1\theta)^2})\sin \delta]
$$
$$
\omega = \mathbf{\kappa} v
$$
The above control law defines the shape of the trajectory with the target $T$ acting as an attractor; the maximum velocity $v_{max}$ defines strength of the attraction.

The ego-centric frame $(r, \theta, \delta)$ and $v_{max}$ define a parameterized trajectory space. The 4-dimension vector, represented by $z* = (r, \theta, \delta, v_{max})$ completely defines the trajectory of the robot to target $T$. The trajectory space is smooth and realizable by construction. Thus, given a trajectory parameter $z*$, we can completely define the trajectory converging to the target.

\subsection{Robot Navigation using MPEPC}\label{ref:cost_mpepc}

Park et al.~\cite{park_mpepc} frames the safe navigation as an optimization problem, selecting a suitable trajectory parameter $z*$, which generates the desired trajectory. 
Let 
$$
q_{z*} : [0,T]\rightarrow C 
$$
denote the trajectory parameterized by $z*$ within a finite horizon $T$.  

The planner optimizes to select a suitable trajectory parameter $z*$ which minimizes the expected trajectory cost.
\begin{multline}\label{eqn:orig_cost}
    J(q_{z*}) = \sum^N  p_{s_i} * J_{progress_i} + J_{action_i} \\+ (1-p_{s_i})* J_{collision_i}
\end{multline}
The term $J_{progress}$ captures the progress the robot makes towards the goal, $J_{action}$ captures the cost of applying large action, and $J_{collision}$ captures the cost of collision. The progress and collision terms are weighed by survivability ($p_s$), which is a measure of the probability that the trajectory remains collision-free. $N$ is the planning horizon.
 
The authors define survivability based on the notion of collision probability into the cost function. From~\cite{park_thesis}, the probability of collision for a robot over a short time segment is defined by the bell-shaped function:

\begin{equation}\label{eqn:pci}
p_{c_i} = \exp(-d_o^2/ \sigma^2)
\end{equation}

The probability of survivability is defined using the $p_c$ as
\begin{equation}\label{eqn:psi}
    p_{s_i} = \Pi_1^i (1 - p_{c_k}).
\end{equation}
{\color{black}{
This formulation is used in a navigation framework~\cite{park_thesis}. The framework guarantees probabilistic safety rather than trying to provide absolute collision avoidance, which is very hard to guarantee in cluttered, dynamic environments. 
}}
\section{Our Method: DS-MPEPC}
In this section, we describe our algorithm for robot navigation. This is based on our new collision probability function $\tilde{p}_{c_i}$ (Section~\ref{ref:coll_prob}) which is used to define the survivability term $\tilde{p}_{s_i}$ (as in Equation~\ref{eqn:psi}). Moreover, we define an additional terminal state cost $J_{terminal}$ (Section~\ref{ref:term_cost}) based on time-to-goal and time-to-collision values.
The modified cost function is given below:
\begin{multline}\label{eqn:mod_cost}
\tilde{J}(q_{z*}) = \\ \sum^N \big( \tilde{p}_{s_i} * J_{progress_i} + J_{action_i} + (1 - \tilde{p}_{s_i}) * J_{collision_i} \big) \\+ J_{terminal}(x_N)    
\end{multline}

\subsection{Collision Probability ($\tilde{p}_{c_i}$)}\label{ref:coll_prob}
As seen in Equation~\ref{eqn:pci}, the collision probability is defined as a function of the robot's distance to its closest obstacle ($d_o$). The collision probability captures the risk associated with a particular configuration and is a bell-shaped curve that raises to $1$ when the $d_o = 0$, and $0$ when $d_o = \infty$. Our proposed collision probability formulation aims to be less conservative in terms of assigning a collision probability to a trajectory segment. 

A robot almost in collision with a wall or an obstacle has a survivability of zero. From Equation~\ref{eqn:orig_cost}, we observe the progress cost is scaled down to zero by the survivability term. Consider a goal located at a short distance (e.g., $2$m) in front of the robot. Since the progress cost is scaled down to zero, the planner does not select a trajectory that leads the robot to the goal even when it increases the distance to the wall and is safe. A purely distance-based collision probability does not capture the ability of the robot to reduce the {\em{risk}} by moving away from the obstacle.

We modify the collision probability function based on the intuition that, for a robot with close obstacle proximity, it is safer to move away from the obstacle rather than toward it. We consider the following behavior for the modified $p_c$ function.
\begin{itemize}
    \item $p_c$ increases as the robot comes close to an obstacle.
    \item $p_c$ values reduce along motion directions with higher time to collision. 
\end{itemize}

Hence, our modified collision probability is a function of distance to the closest obstacle (a reactive term) and time to collision (an anticipatory term). Time to collision (TTC) is defined as the number of seconds in the future before a collision if the agent and obstacle continue their direction of motion. A TTC value of $0$ indicates an agent is already in collision, while a TTC value of $\infty$ indicates the agent and the obstacle will not collide if they follow their current velocities forever.

The modified collision probability has two terms as  represented below:
\begin{equation*}
\tilde{p}_c = \texttt{Reactive term} * \texttt{Anticipatory term}
\end{equation*}

The reactive component captures the {\em{risk}} associated with being at a particular configuration given by the 2-D position and orientation. The reactive term is distance-based and physically represents the localization uncertainty. Thus this terms is same as $p_{c}$.  

The anticipatory component is a function of time-to-collision (TTC), which provides a look ahead at the time available to the robot to avoid collision based on the velocity.

\begin{equation}\label{eqn:mod_pc}
\tilde{p}_c = \exp{\bigg(-\frac{d_o^2}{\sigma_{d}^2}\bigg)} * \bigg(1 - a\exp{\bigg( \frac{(1/TTC)^2}{\sigma_{1/TTC}^2} \bigg)} \bigg)
\end{equation}

From Equation~\ref{eqn:mod_pc}, we see that the anticipatory component only reduces the effect of the distance-based term provided the robot moves away from the obstacle based on the velocity direction. Thus, the anticipatory component cannot increase the collision probability.
\\\\
\subsubsection{Weight parameter $a$}
A robot in a narrow passage moving parallel to the walls has a time-to-collision value of $\infty$. The robot is not heading towards the obstacle, but it could be unsafe to regard the collision probability as $0$ in tight spaces. We introduce a weight parameter $a$ that limits the effect of the time-to-collision term. With a weight of $a < 1$, the time-to-collision component merely reduces the effect of the distance-based term but does not nullify it. Thus, the following Lemma holds:

\begin{lemma}\label{lemma:1}
The proposed collision probability ($\tilde{p}_{c_i}$) results in a survivability ($\tilde{p}_{s_i}$), which is at least the original survivability ($p_{s_i}$). Thus, it is less conservative in terms of regarding a trajectory segment as survivable.
\end{lemma}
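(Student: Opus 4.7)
The plan is to reduce Lemma~\ref{lemma:1} to a pointwise comparison $\tilde{p}_{c_i} \le p_{c_i}$ at every timestep $i$, and then to propagate that inequality through the product that defines survivability. This splits naturally into two steps: a pointwise bound on the modified collision probability, followed by a monotonicity argument on the survivability product in Equation~\ref{eqn:psi}.

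For the first step, I would write the modified collision probability from Equation~\ref{eqn:mod_pc} as a product $\tilde{p}_{c_i} = p_{c_i} \cdot A_i$, where $A_i$ denotes the anticipatory factor $1 - a\exp(\cdot)$ and the reactive factor is identified with the original $p_{c_i} = \exp(-d_o^2/\sigma^2)$ (implicitly taking $\sigma_d = \sigma$, so the two models agree in the absence of any anticipatory correction). The key algebraic fact I need is $A_i \in [0,1]$: the upper bound $A_i \le 1$ is immediate because $a>0$ and the exponential is non-negative, while the lower bound relies on the weight stipulation $a < 1$ discussed just before the Lemma, which is precisely the role the authors give to $a$. Once $A_i \in [0,1]$ is in hand, we immediately get both $\tilde{p}_{c_i} \le p_{c_i}$ and $\tilde{p}_{c_i} \in [0,1]$, so the modified quantity is still a legitimate probability. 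For the second step, since $p_{c_k}, \tilde{p}_{c_k} \in [0,1]$, both $(1 - p_{c_k})$ and $(1 - \tilde{p}_{c_k})$ are non-negative, and the pointwise inequality yields $1 - \tilde{p}_{c_k} \ge 1 - p_{c_k} \ge 0$. A one-line induction on $i$ using Equation~\ref{eqn:psi} then gives $\tilde{p}_{s_i} \ge p_{s_i}$.

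The main obstacle is pinning down the non-negativity of the anticipatory factor, since as written the exponent $(1/TTC)^2/\sigma_{1/TTC}^2$ can grow without bound for near-collision states, which could in principle push $a\exp(\cdot)$ past one and send $A_i$ negative. I would resolve this either by reading the exponential in its intended decaying form (so the factor automatically lives in $[1-a,\,1]$ for any $a<1$) or by explicitly calibrating $a$ against the operating regime of interest. Once that subtlety is settled, the remainder is essentially bookkeeping: monotonicity of $x \mapsto 1-x$ on $[0,1]$, followed by monotonicity of the finite product of non-negative terms, delivers the survivability inequality claimed by the Lemma.
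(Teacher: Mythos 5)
Your proof is correct and follows essentially the same route as the paper's: bound the anticipatory factor in $[1-a,\,1]$ so that $\tilde{p}_{c_i}\le p_{c_i}$ pointwise, then propagate $1-\tilde{p}_{c_k}\ge 1-p_{c_k}\ge 0$ through the product in Equation~\ref{eqn:psi}. You are in fact more careful than the paper on the one delicate point---as printed, the exponent in Equation~\ref{eqn:mod_pc} is positive, so the anticipatory factor is only guaranteed to lie in $[1-a,\,1]$ under the intended decaying form $\exp(-(1/TTC)^2/\sigma_{1/TTC}^2)$ (consistent with the later $C_{TTC}$ definition and with the proof of Lemma~\ref{lemma:2}), whereas the paper simply asserts the max/min values of $\tilde{p}_{c_i}$ without addressing this.
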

\begin{proof}
From Equation~\ref{eqn:mod_pc}, the maximum and minimum values of $\tilde{p}_{c_i}$ in relation to $p_{c_i}$ is given by,
$$
\max \tilde{p}_{c_i} = p_{c_i}, \quad \min \tilde{p}_{c_i} = (1 - a) \cdot p_{c_i}.
$$
From the definition of survivability, $\tilde{p}_{s_i} = \Pi_1^i (1 - {p}_{c_k})$.
From the maximum values of $\tilde{p}_{c_i}$, we know $\min (1 - \tilde{p}_{c_i}) = (1 - p_{c_i})$. Thus, the minimum values of $\tilde{p}_{s_i}$ are bounded by
$$\min \tilde{p}_{s_i} = \Pi_1^i (1 - {p}_{c_k}) = p_{s_i}.$$
\end{proof}

\begin{lemma}\label{lemma:2}
For a robot in collision state $\tilde{p}_{s_i} = 0, \forall i$
\end{lemma}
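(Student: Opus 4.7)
The plan is to unfold the definition of $\tilde{p}_{s_i}$ and show that, as soon as the robot is in a collision state, the product defining survivability picks up a factor equal to zero, which then propagates to all subsequent indices.

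First I would characterize the collision state in terms of the quantities entering the formula for $\tilde{p}_{c_i}$. By definition, the robot being in a collision state means that its closest obstacle is at zero distance, i.e.\ $d_o = 0$, and consequently the time until that collision is zero, i.e.\ $TTC = 0$ (so $1/TTC$ is unbounded). I would then substitute these values into Equation~\eqref{eqn:mod_pc}: the reactive factor $\exp(-d_o^2/\sigma_d^2)$ collapses to $\exp(0) = 1$, and (using the convention in the paper that the anticipatory term approaches $1$ as $TTC\to 0$, since the lookahead cannot reduce risk when a collision is imminent) the anticipatory factor also reduces to $1$. Hence $\tilde{p}_{c_k} = 1$ at the time step $k$ at which the robot is in collision.

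Next I would invoke the survivability definition
\[
\tilde{p}_{s_i} \;=\; \prod_{k=1}^{i} (1 - \tilde{p}_{c_k}),
\]
which is identical in form to Equation~\eqref{eqn:psi} but with the modified collision probability. Since at the collision step $k$ the factor $(1 - \tilde{p}_{c_k}) = 0$, every partial product containing this index vanishes, giving $\tilde{p}_{s_i} = 0$ for all $i \ge k$. Because the robot is assumed to be in the collision state from the outset (so $k = 1$), this yields $\tilde{p}_{s_i} = 0$ for all $i$, which is the claim.

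I expect the main (minor) obstacle to be the corner-case bookkeeping around $TTC$: one must argue that the anticipatory factor does not become negative, zero, or undefined at the collision instant, but saturates at $1$ so that the reactive factor $1$ is the one that drives $\tilde{p}_c$ to its maximum. This is handled by appealing to the weight parameter $a<1$ (which keeps the anticipatory factor in $[1-a,\,1]$) and the limiting behavior as $1/TTC$ grows unbounded, together with the paper's earlier observation that "the anticipatory component cannot increase the collision probability" but can reach $1$ at its maximum. Once that is pinned down, the remainder is a one-line consequence of the product structure of $\tilde{p}_{s_i}$.
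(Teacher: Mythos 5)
Your proposal is correct and follows essentially the same route as the paper: set $d_o = 0$ so the reactive factor equals $1$, note that $TTC = 0$ in a collision state so the anticipatory factor saturates at $1$, conclude $\tilde{p}_{c_1} = 1$, and let the zero factor $(1-\tilde{p}_{c_1})$ annihilate the survivability product for all $i$. Your extra care about the anticipatory factor staying in $[1-a,1]$ (which presumes the intended negative sign in the exponent of Equation~\eqref{eqn:mod_pc}) matches the paper's implicit reading, so there is no substantive difference.
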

\begin{proof}
    When the robot is in collision state (i.e., $d_o = 0$), the TTC value is $0$ for any non-zero relative velocity between the robot and the obstacle. Hence, in Equation~\ref{eqn:mod_pc} the term
    $\big(1 - a\exp{\big( \frac{(1/TTC)^2}{\sigma_{1/TTC}^2} \big)} \big) = 1$. Thus, $\tilde{p}_{c_1} = p_{c_1} = 1 \implies \tilde{p}_{s_1} = 0$. From Equation~\ref{eqn:psi}, we get $\tilde{p}_{s_i} = 0, \forall i$.
\end{proof}

\begin{figure*}[h!]
\centering
\begin{subfigure}{0.24\textwidth}
  \centering
  \includegraphics[trim={5cm 5cm 0 2cm},clip, width=.95\linewidth]{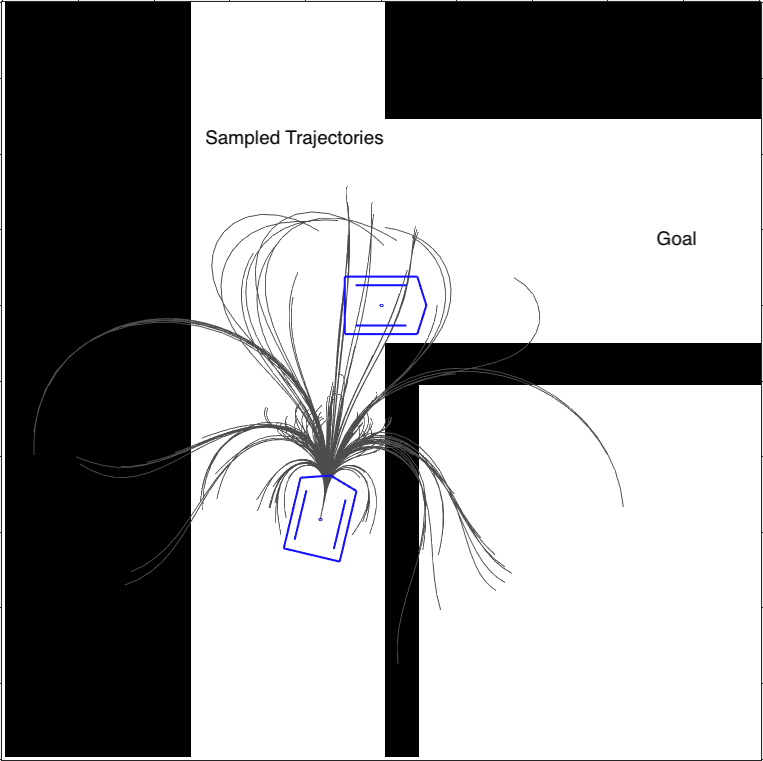}
  \caption{Sampled Trajectories}
  \label{fig:sample}
\end{subfigure}
\begin{subfigure}{0.24\textwidth}
  \centering
  \includegraphics[trim={5cm 5cm 0 2cm},clip,width=.95\linewidth]{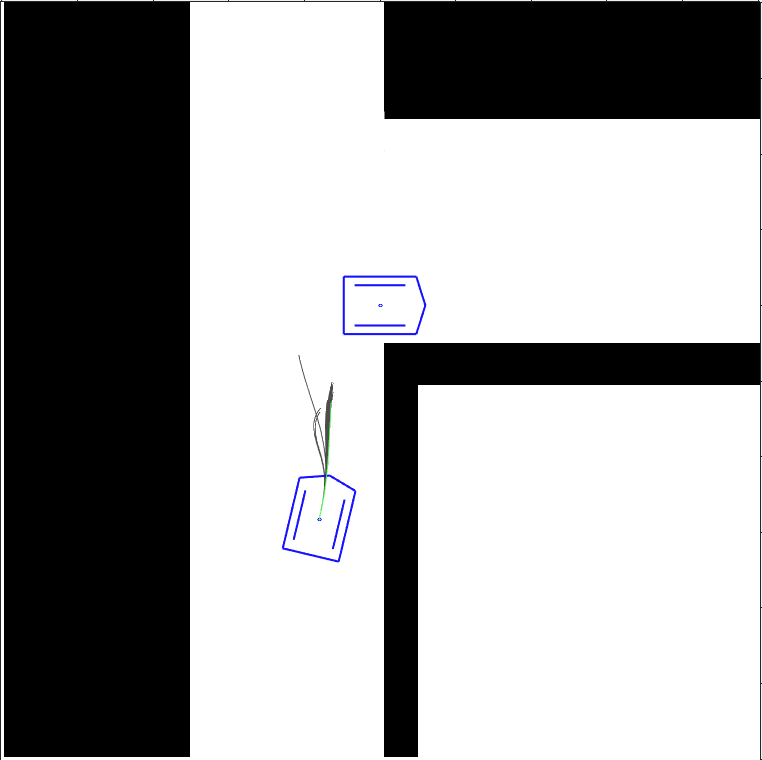}
  \caption{MPEPC Cost}
  \label{fig:cost}
\end{subfigure}
\begin{subfigure}{0.24\textwidth}
  \centering
  \includegraphics[trim={5cm 5cm 0 2cm},clip,width=.95\linewidth]{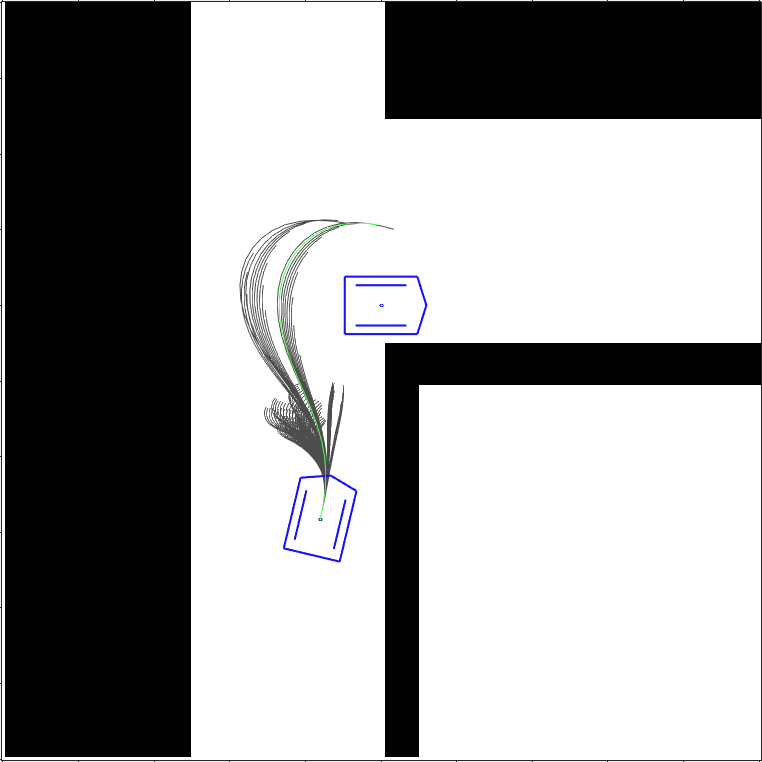}
  \caption{Time-to-Goal}
  \label{fig:ttg}
\end{subfigure}
\begin{subfigure}{0.24\textwidth}
  \centering
  \includegraphics[trim={5cm 5cm 0 2cm},clip,width=.95\linewidth]{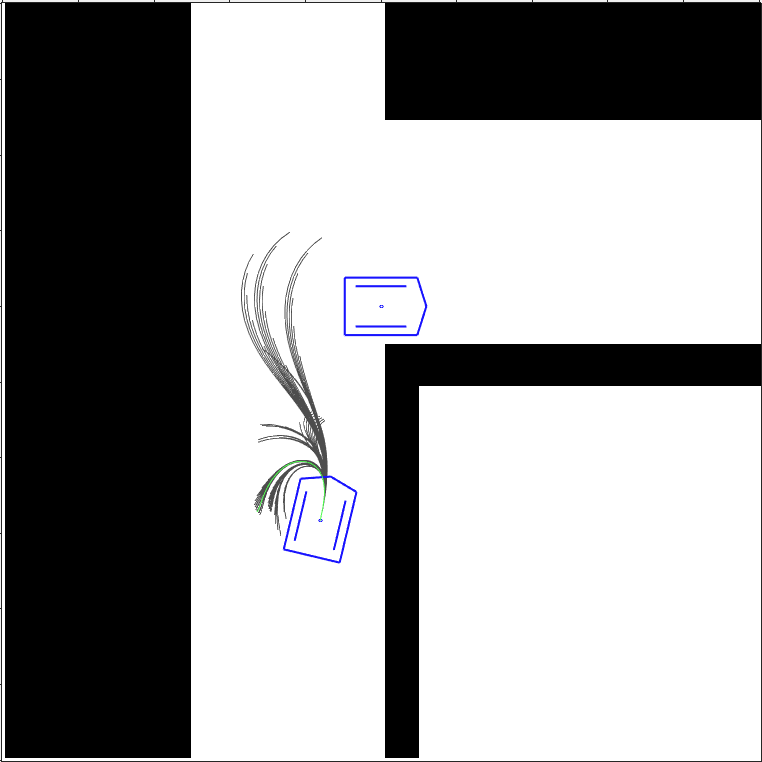}
  \caption{Time-to-Collision}
  \label{fig:ttc}
\end{subfigure}
\caption{We illustrate a scenario with two robots in a T-shaped corridor. One robot is stationary, and the other robot is moving toward the goal by turning into the corridor. (a) We show a few sampled trajectories for the robot moving toward the goal. (b) We illustrate 50 trajectories with the lowest cost as computed with the MPEPC cost function. We observe the trajectory distribution is directed toward the other robot, and would eventually lead to a deadlock before reaching the goal. In (c), we show a distribution of 50 trajectories with the lowest TTG values. In (d), we show a distribution of 50 trajectories with the highest TTC value. From (c) and (d) we notice the distribution of trajectories allows the robot to move around the stationary robot and TTC and TTG values provide a metric to reduce deadlocks.}
\label{fig:ttcttg_Eval}
\end{figure*}

\subsection{Terminal Cost ($J_{terminal}$) }\label{ref:term_cost}
The cost formulation optimizes for maximum progress made toward the goal while maintaining safety. The MPEPC cost may cause the robot to deadlock in front of an obstacle because a given local trajectory makes the maximum progress towards the goal while being safe. In some scenarios, moving away from the goal can eventually steer the robot towards the goal. We incorporate this behavior into the cost formulation by introducing a terminal state cost. We define the terminal cost as a function of the agent's expected time-to-goal and time-to-collision as computed at the terminal state of the local trajectory.
In figure~\ref{fig:ttcttg_Eval}, we illustrate a scenario to show time-to-goal and time-to-collision values are suitable for deadlock reduction.
\\
{\subsubsection{Expected Time-to-Goal:}}
The expected time-to-goal is computed as the ratio of the euclidean distance to the goal from the terminal trajectory state and the component of the terminal state velocity in the direction of the goal vector. Hence, if the terminal velocity tends to zero or if the terminal velocity has no components in the goal direction, the expected time-to-goal tends to infinity. Consequently, when the distance to the goal tends to zero, the time-to-goal tends to zero.
When a robot is deadlocked (or frozen), its distance to the goal is non-zero, its velocity is zero, and the time-to-goal is $\infty$. Hence, time-to-goal provides a relevant metric for deadlock detection. 
\\
{\subsubsection{Expected Time-to-Collision:}}
The time-to-collision value depends on the agent's and obstacle's velocities. An agent halting before a static robot would have a time-to-collision of $\infty$, indicating the agent is safe. 
A robot halting and facing an obstacle may have all its future trajectories passing through the obstacle due to its orientation, while the robot looking into free space may have more collision-free trajectories to optimize.
To incorporate the idea, we compute the TTC at the terminal state, assuming the robot travels by maintaining its terminal state orientation with its max velocity. This approximate computation of the time-to-goal is suitable, as it captures the terminal state orientation of the robot.
\\
{\subsubsection{Computing $J_{terminal}$:}}

We define a novel terminal state cost based on the time-to-goal and time-to-collision values, which aid in reducing deadlocks and facilitate the robot reaching the goal. $J_{terminal}$ has three main terms. First, the terminal state survivability $p_{s_N}$ ensures the $J_{terminal}$ shrinks to zero when the trajectory collides. Second, the $C_{TTG}$ term is such that $C_{TTG} \rightarrow 1$ as $TTG \rightarrow \infty$. The third term $C_{TTC} \rightarrow 1$ as $TTC \rightarrow \infty$. Thus the product $C_{TTG} * C_{TTC}$ acts on trajectories with $TTG \rightarrow \infty$, and it prefers trajectories with longer time-to-collision values. As mentioned above, a configuration with a higher time-to-collision may have more trajectories to choose from in the subsequent timestep, thus aiding in resolving a deadlock. Moreover, $J_{terminal}$ has negative values only on trajectories with non-zero survivability to ensure it works only on non-colliding trajectories. We define the terminal state cost as follows:
$$
C_{TTG} = \exp \bigg(-\frac{{1}/{TTG}^2}{\sigma_{1/TTG}^2}\bigg),
$$
$$
C_{TTC} = \exp \bigg(- \frac{1/TTC^2}{\sigma_{1/TTC}^2} \bigg),
$$
\begin{equation}
    J_{terminal} = - p_{s_N} * \bigg( C_{TTG} * C_{TTC} \bigg).
\end{equation}
As can be observed, the terminal state cost ($J_{Terminal}$) is bounded $J_{terminal} = [-1, 0]$.
\begin{lemma}\label{lemma:3}
 The modified cost function maintains the MPEPC property that avoids moving when robot is already in a collision state.
\end{lemma}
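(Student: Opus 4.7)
The plan is to leverage Lemma~\ref{lemma:2} directly and show that each group of terms in the modified cost function (Equation~\ref{eqn:mod_cost}) either vanishes or collapses to the corresponding term of the original MPEPC cost (Equation~\ref{eqn:orig_cost}) whenever the robot starts in a collision state. Since the original MPEPC cost already enforces the no-motion-in-collision property, reducing $\tilde{J}$ to that original form on the collision set is enough.

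First, I would invoke Lemma~\ref{lemma:2} to conclude that if the robot begins in collision then $\tilde{p}_{s_i}=0$ for every $i \in \{1, \ldots, N\}$. Substituting this into Equation~\ref{eqn:mod_cost}, the survivability-weighted progress term $\tilde{p}_{s_i} \cdot J_{progress_i}$ vanishes for each $i$, and the collision-weighted term $(1-\tilde{p}_{s_i}) \cdot J_{collision_i}$ reduces to $J_{collision_i}$. Next, I would examine the new terminal term: because $J_{terminal}$ is defined with the explicit prefactor $p_{s_N}$, and Lemma~\ref{lemma:2} forces $\tilde{p}_{s_N}=0$, we obtain $J_{terminal}(x_N)=0$ regardless of the values taken by $C_{TTG}$ and $C_{TTC}$. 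Thus the additional terminal term introduced in DS-MPEPC cannot inject any motion-encouraging bias when the robot is already in a collision state. Comparing what remains of $\tilde{J}$ against the analogous reduction of the original MPEPC cost, both collapse to $\sum_{i=1}^{N}\bigl(J_{action_i}+J_{collision_i}\bigr)$, so the two optimization problems coincide on the collision set and therefore share the same minimizer, namely the zero-action trajectory.

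The hard part will be articulating precisely why the reduced cost $\sum_i (J_{action_i} + J_{collision_i})$ is minimized at zero motion, since the body of the paper treats this as an inherited property of MPEPC rather than something derived here. My plan is to state this dependency explicitly as the bridge step, noting that $J_{action_i}$ penalizes control effort so its minimum over the admissible set is attained at zero commanded velocity/acceleration, and that $J_{collision_i}$ can only grow (or stay equal) if the robot advances further into the obstacle, so remaining stationary is simultaneously optimal for both remaining terms. With that observation in hand, the lemma follows immediately from the reduction above.
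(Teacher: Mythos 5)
Your proposal is correct and follows essentially the same route as the paper: invoke Lemma~\ref{lemma:2} to get $\tilde{p}_{s_i}=0$ for all $i$, observe that the terminal term vanishes via its $p_{s_N}$ prefactor, and conclude that the modified cost collapses to the original MPEPC cost, whose no-motion-in-collision property is then inherited. You are somewhat more careful than the paper's two-sentence proof in explicitly checking the terminal term and in flagging that the final step rests on an inherited MPEPC property rather than something proved here, but the underlying argument is identical.
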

\begin{proof}
    When the robot is in collision state ($d_o$), the survivability $\tilde{p}_{s_i} = 0, \; \forall i$ from Lemma~\ref{lemma:2}. Under this survivability condition, the modified cost definition (Equation~\ref{eqn:mod_cost}) reduces to the MPEPC's cost definition (Equation~\ref{eqn:orig_cost}).
\end{proof}

\subsection{Trajectory Optimization}
The navigation optimization problem selects a suitable trajectory and control input by optimizing with our modified cost function. From Section~\ref{ref:cost_mpepc}, we know the navigation problem is framed as an unconstrained finite horizon optimization problem to select a trajectory parameter $z*$. Given a $z*$ the trajectory is completely defined (Section~\ref{sec:smoothlaw}). Thus, for a $z*$, the trajectory is simulated and the cost is computed with our proposed cost function. The optimization minimizes $\tilde{J}(q_{z^*})$ (Equation~\ref{eqn:mod_cost}) to compute an optimal $z*$ having the minimum trajectory cost.

\begin{equation*}
\underset{z^*}{\text{minimize}} \quad \tilde{J}(q_{z^*})   
\end{equation*}

The robot's control inputs in terms of linear and angular velocity is computed from the optimal $z*$ as described in Section~\ref{sec:smoothlaw}.




\section{Evaluations}
In this section we describe our evaluation setup and highlight the performance on challenging benchmarks.

\begin{figure*}
\centering
\begin{subfigure}{0.19\textwidth}
  \centering
  \includegraphics[width=.9\linewidth]{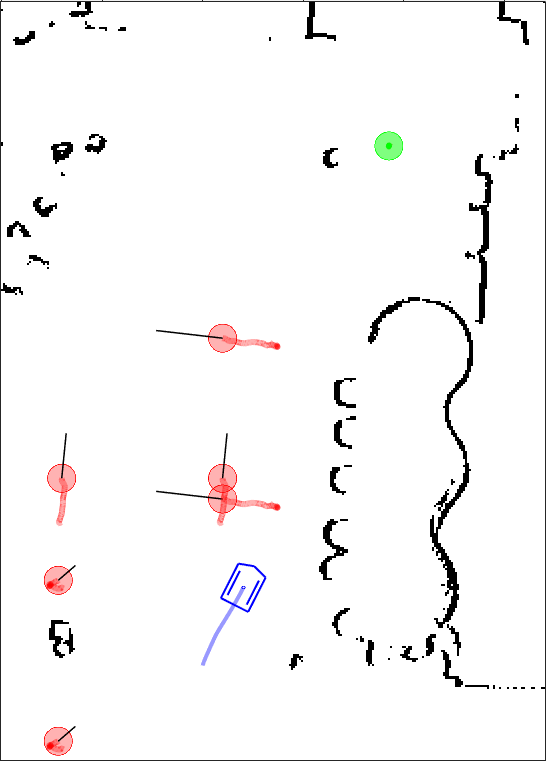}
  \caption{t = 5s}
\end{subfigure}%
\begin{subfigure}{0.19\textwidth}
  \centering
  \includegraphics[width=.9\linewidth]{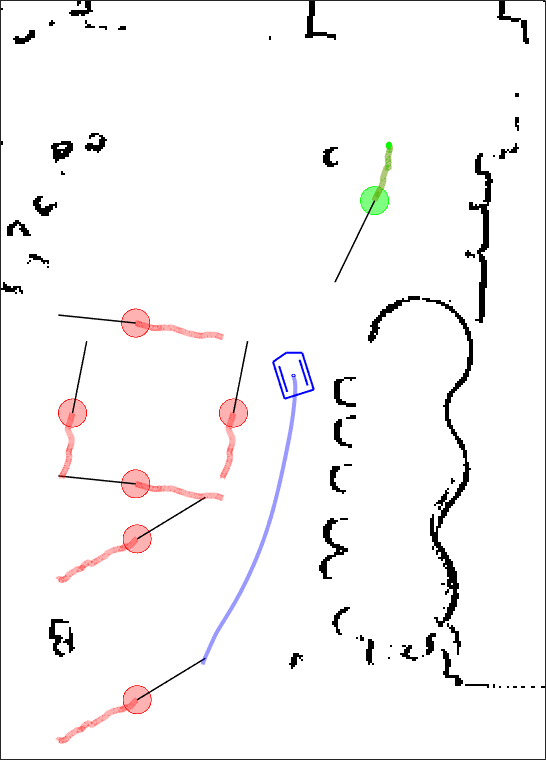}
  \caption{t = 10s}
\end{subfigure}
\begin{subfigure}{0.19\textwidth}
  \centering
  \includegraphics[width=.9\linewidth]{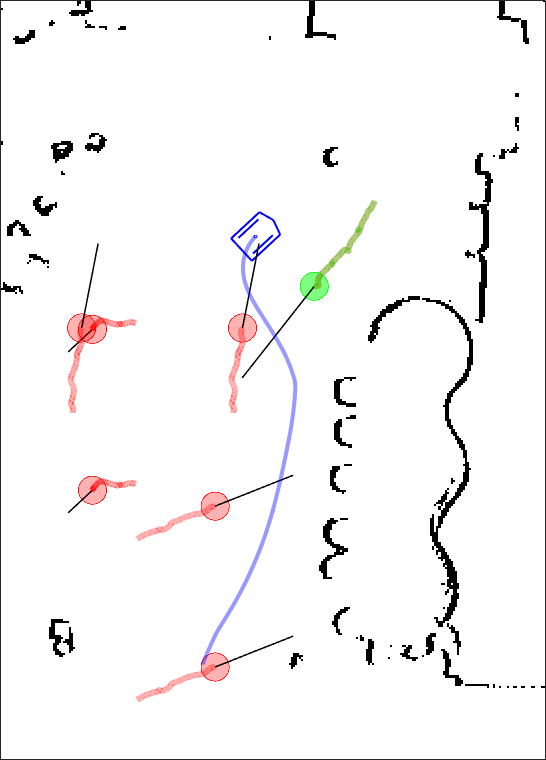}
  \caption{t = 15s}
\end{subfigure}
\begin{subfigure}{0.19\textwidth}
  \centering
  \includegraphics[width=.9\linewidth]{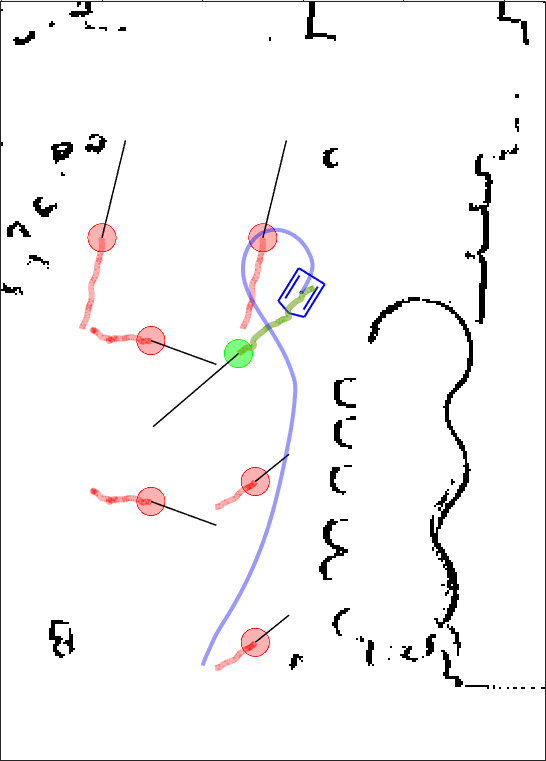}
  \caption{t = 20s}
\end{subfigure}
\begin{subfigure}{0.19\textwidth}
  \centering
  \includegraphics[width=.9\linewidth]{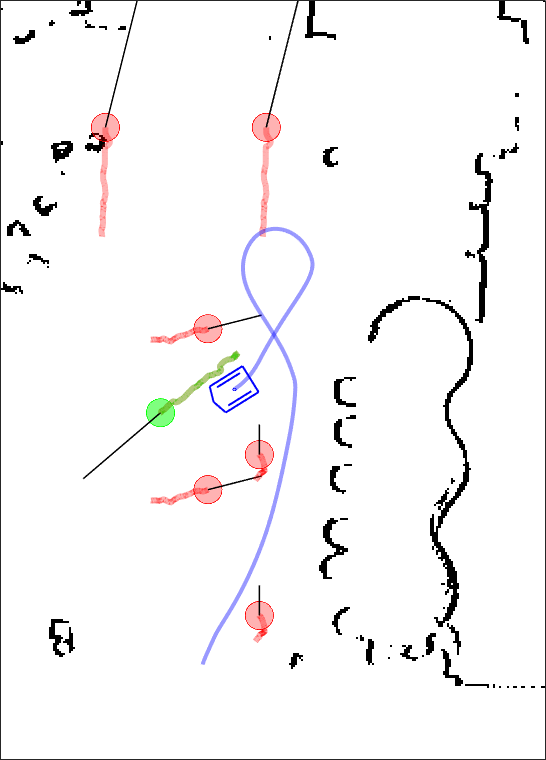}
  \caption{t = 25s}
\end{subfigure}
\caption{{\bf Evaluation in real-world environments: Hall scenario}: The figure illustrates a hallway scenario with multiple pedestrians denoted by red and green disks.  This noisy data is captured using sensors The static obstacles in the environment are represented in black by the occupancy map. The robot (blue) follows a pedestrian denoted by a green disk while navigating around static (black) and other dynamic obstacles (red). The red and green traces show the trajectory followed by different pedestrian over the past few timesteps. The black line shows the predicted future trajectory of the pedestrian. The blue trace shows the trajectory of the robot during the navigation. The figure from left to right show the navigation simulation at regular time intervals. We observe DS-MPEPC is able to compute a smooth path around the obstacles.}
\label{fig:hall}
\end{figure*}

\begin{figure*}
\centering
\begin{subfigure}{0.32\textwidth}
  \centering
  \includegraphics[width=.95\linewidth]{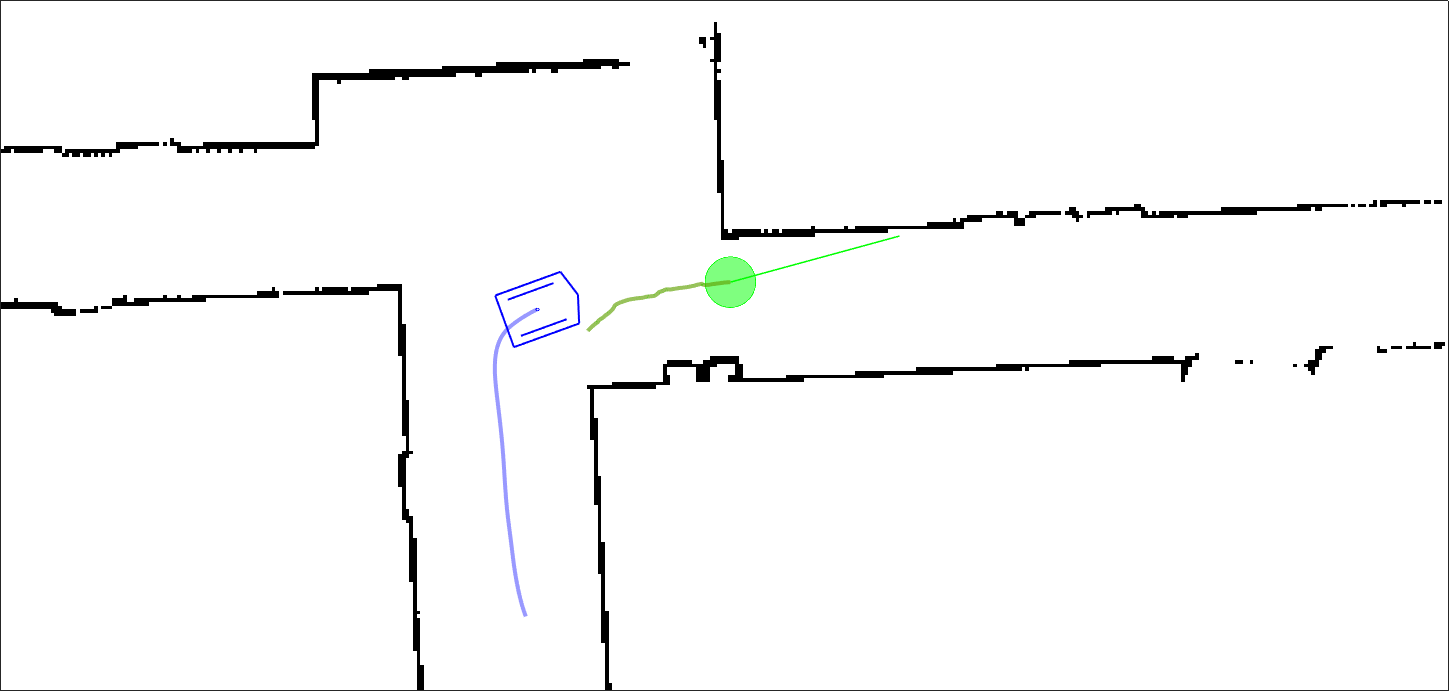}
  \caption{A subfigure}
\end{subfigure}%
\begin{subfigure}{0.32\textwidth}
  \centering
  \includegraphics[width=.95\linewidth]{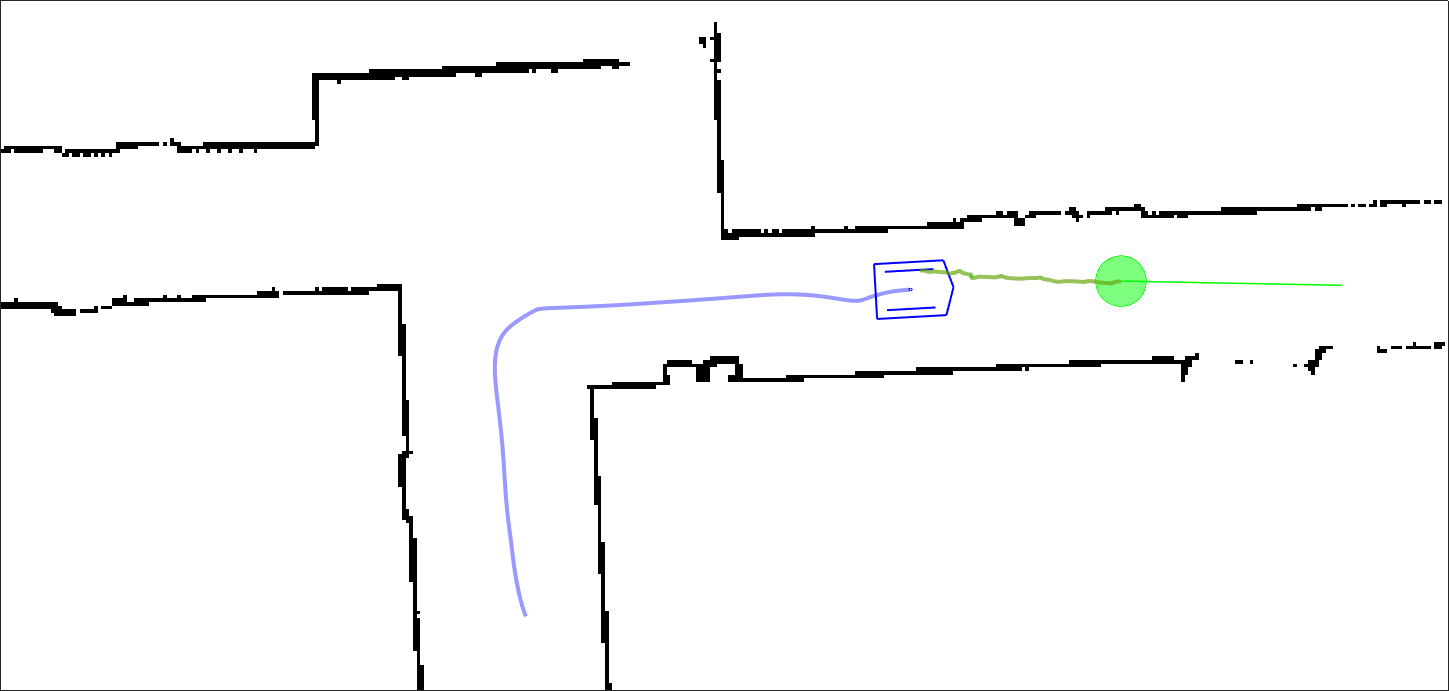}
  \caption{A subfigure}
\end{subfigure}
\begin{subfigure}{0.32\textwidth}
  \centering
  \includegraphics[width=.95\linewidth]{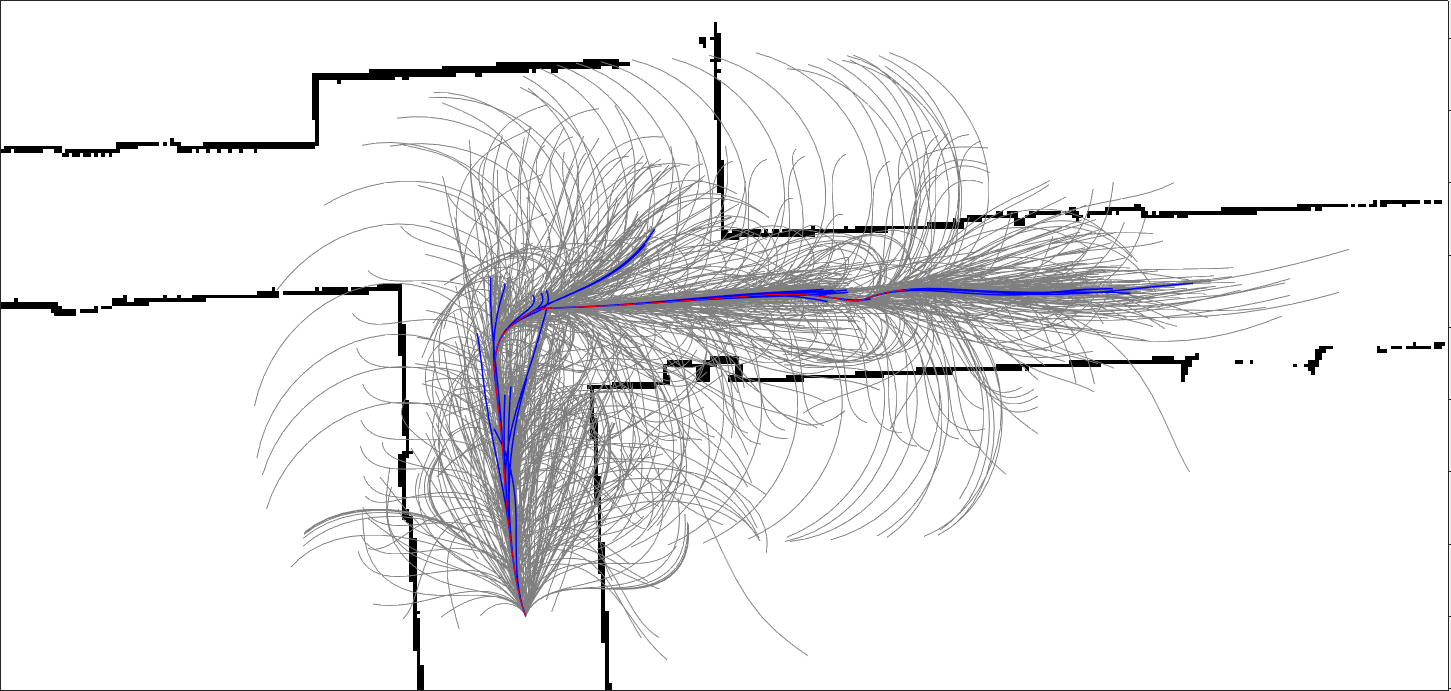}
  \caption{A subfigure}
\end{subfigure}
\caption{{\bf Evaluation in real-world environments: L-shaped corridor}: We highlight the performance in another real-world scene captured using sensors. The figure illustrates a scenario with the robot (blue) following a pedestrian (green disk) into a narrow corridor. Figures (a) and (b) show the actual trajectory followed by the robot during the simulation. Figure (c) shows the evaluated trajectories (gray) by the planner during the simulation, while the optimal trajectory at each planning cycle is denoted in blue. DS-MPEPC is able to compute a smooth trajectory in this challenging scenario.}
\label{fig:lcorridor}
\end{figure*}

\begin{figure*}
\centering
\begin{subfigure}{0.22\textwidth}
  \centering
  \includegraphics[width=.95\linewidth]{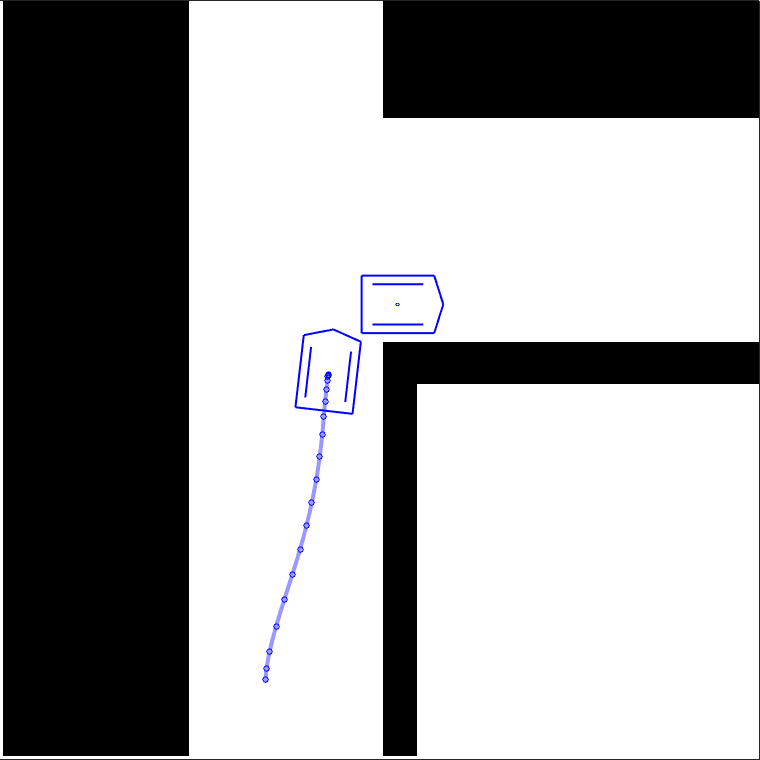}
  \caption{Actual robot trajectory}
\end{subfigure}%
\begin{subfigure}{0.22\textwidth}
  \centering
  \includegraphics[width=.95\linewidth]{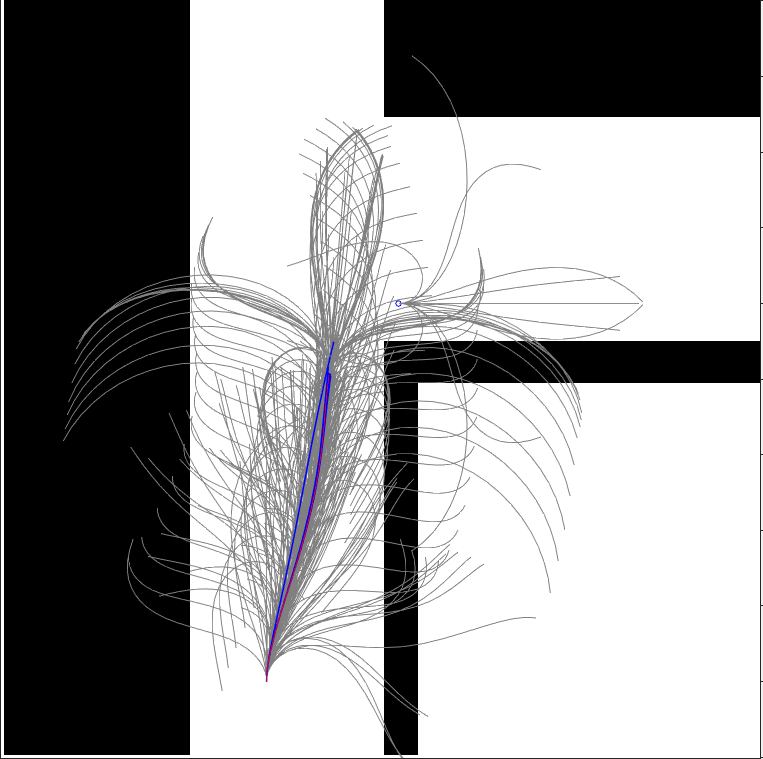}
  \caption{Evaluated trajectories}
\end{subfigure}
\quad \quad \quad
\begin{subfigure}{0.22\textwidth}
  \centering
  \includegraphics[width=.95\linewidth]{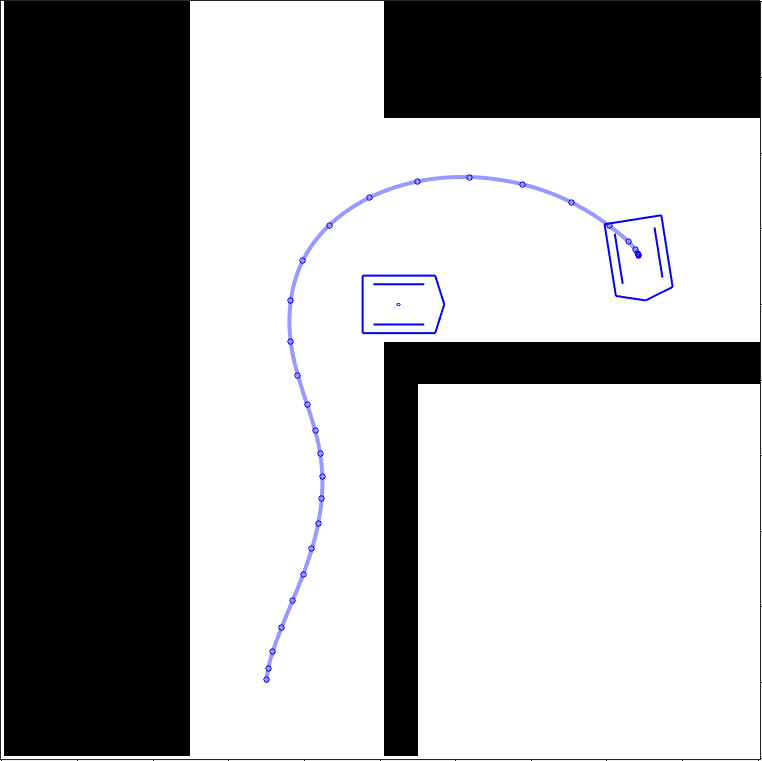}
  \caption{Actual robot trajectory}
\end{subfigure}
\begin{subfigure}{0.22\textwidth}
  \centering
  \includegraphics[width=.95\linewidth]{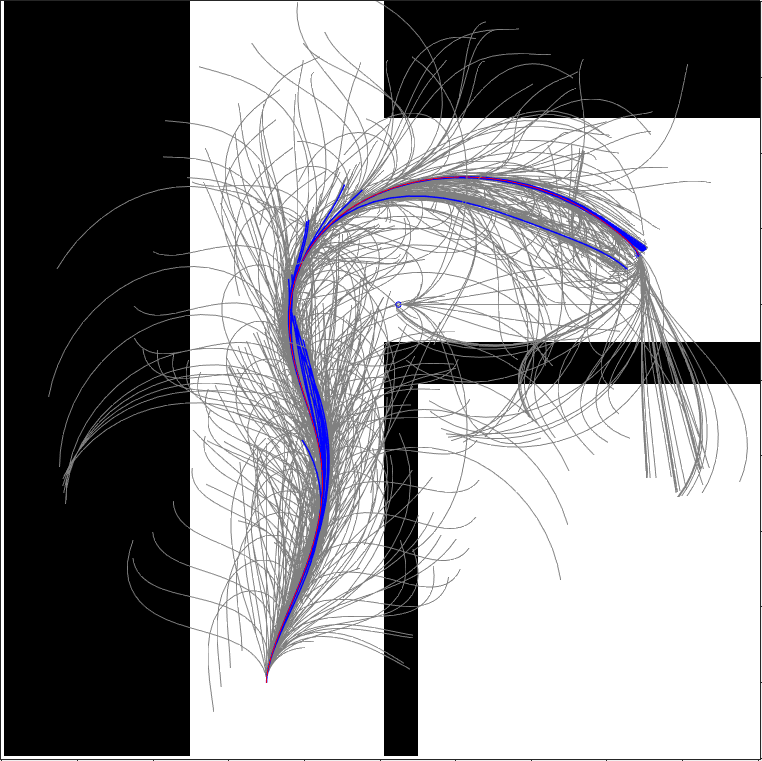}
  \caption{Evaluated trajectories}
\end{subfigure}
\caption{{\bf T-Corridor}: This scenario considers two robots, one moving from the bottom and turning into the corridor on the right. The other robot is stationary and obstructs this corridor at its entrance. Figures (a) and (b) show the robot trajectory (blue) and evaluated trajectories for the moving robot while using the original MPEPC cost function. We can observe the robot deadlocks. Figures (c) and (d) show the robot trajectory while using our proposed cost modification. In this case, the robot navigates successfully without a deadlock with DS-MPEPC.}
\label{fig:tcorridor}
\end{figure*}

\begin{figure*}
\centering
\begin{subfigure}{0.15\textwidth}
  \centering
  \includegraphics[height=5cm,width=.75\linewidth]{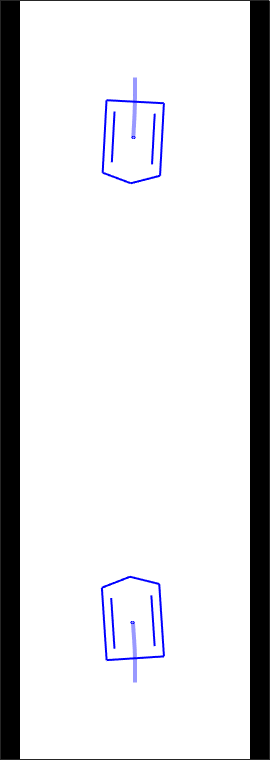}
  \caption{t=2s}
\end{subfigure}%
\begin{subfigure}{0.15\textwidth}
  \centering
  \includegraphics[height=5cm,width=.75\linewidth]{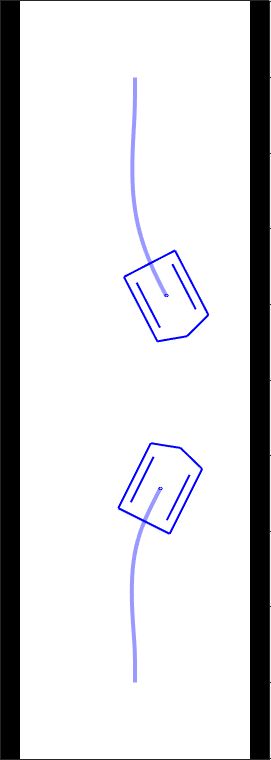}
  \caption{t=4s}
\end{subfigure}
\begin{subfigure}{0.15\textwidth}
  \centering
  \includegraphics[height=5cm,width=.75\linewidth]{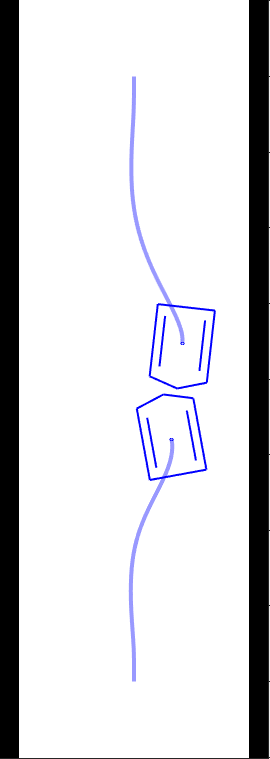}
  \caption{t=6s}
\end{subfigure}
\quad \quad \quad
\begin{subfigure}{0.15\textwidth}
  \centering
  \includegraphics[height=5cm,width=.75\linewidth]{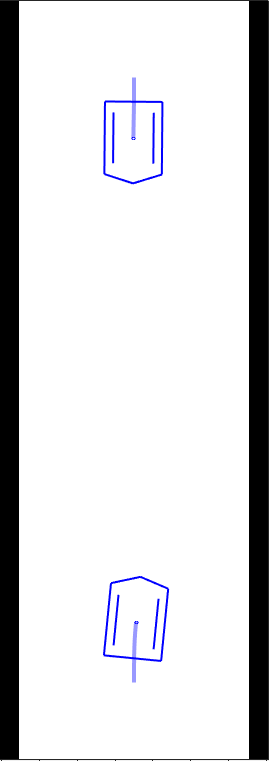}
  \caption{t=2s}
\end{subfigure}%
\begin{subfigure}{0.15\textwidth}
  \centering
  \includegraphics[height=5cm,width=.75\linewidth]{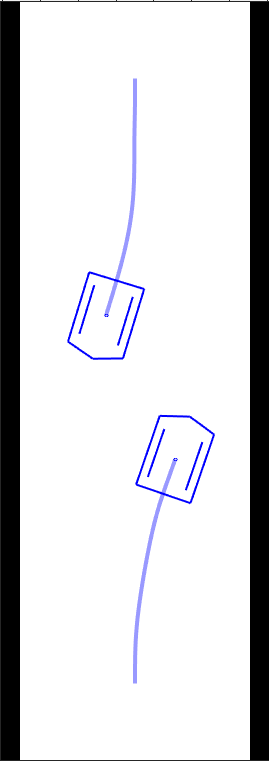}
  \caption{t=4s}
\end{subfigure}
\begin{subfigure}{0.15\textwidth}
  \centering
  \includegraphics[height=5cm,width=.75\linewidth]{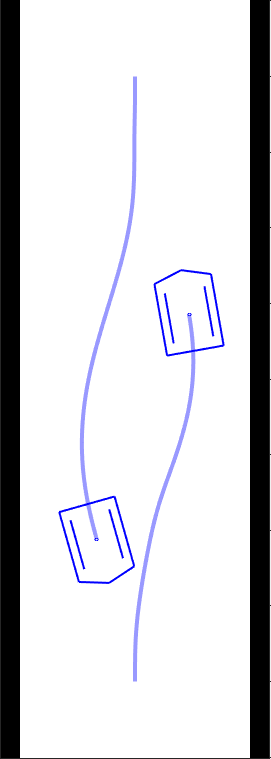}
  \caption{t=6s}
\end{subfigure}
\caption{{\bf Narrow Corridor}: We consider two robots navigating a narrow corridor in opposing directions. Figure (a)-(c) illustrates the case with two robots using MPEPC's cost function. We observe the robots eventually deadlock in this complex case. Figure (d)-(f) shows the trajectories followed by the robots for the modified cost function by DS-MPEPC. In this case, the robot navigates without colliding and deadlocking and reaches the other side of the corridor. This demonstrates the improved navigation behavior of DS-MPEPC with non-circular agents.}
\label{fig:narrowcorridor}
\end{figure*}

\begin{figure*}
\centering
\begin{subfigure}{0.24\textwidth}
  \centering
  \includegraphics[width=.85\linewidth]{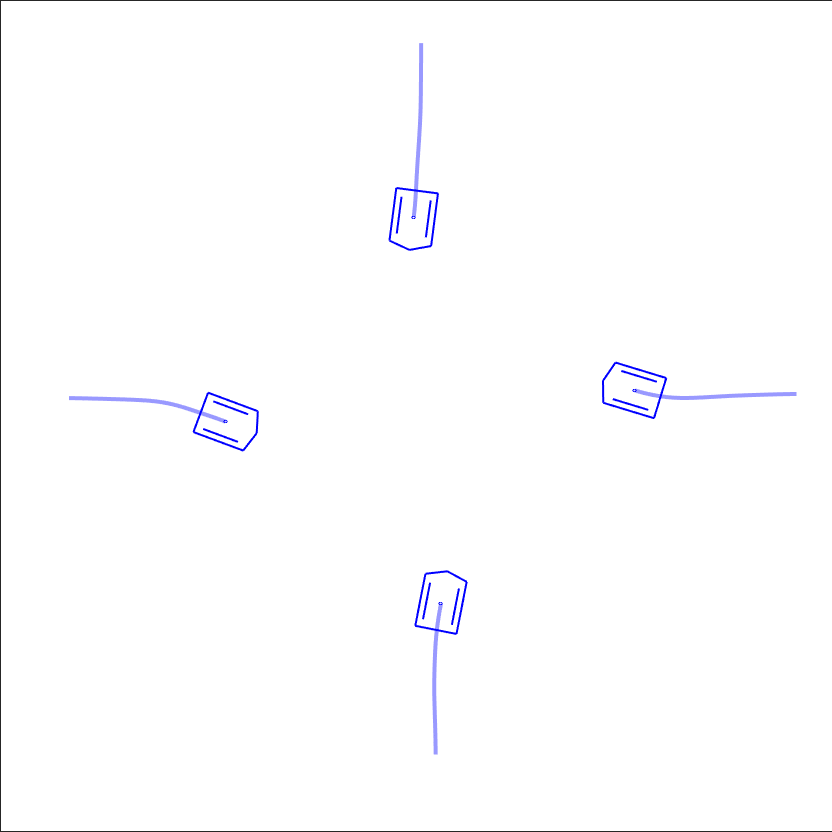}
  \caption{t = 4s}
\end{subfigure}
\begin{subfigure}{0.24\textwidth}
  \centering
  \includegraphics[width=.85\linewidth]{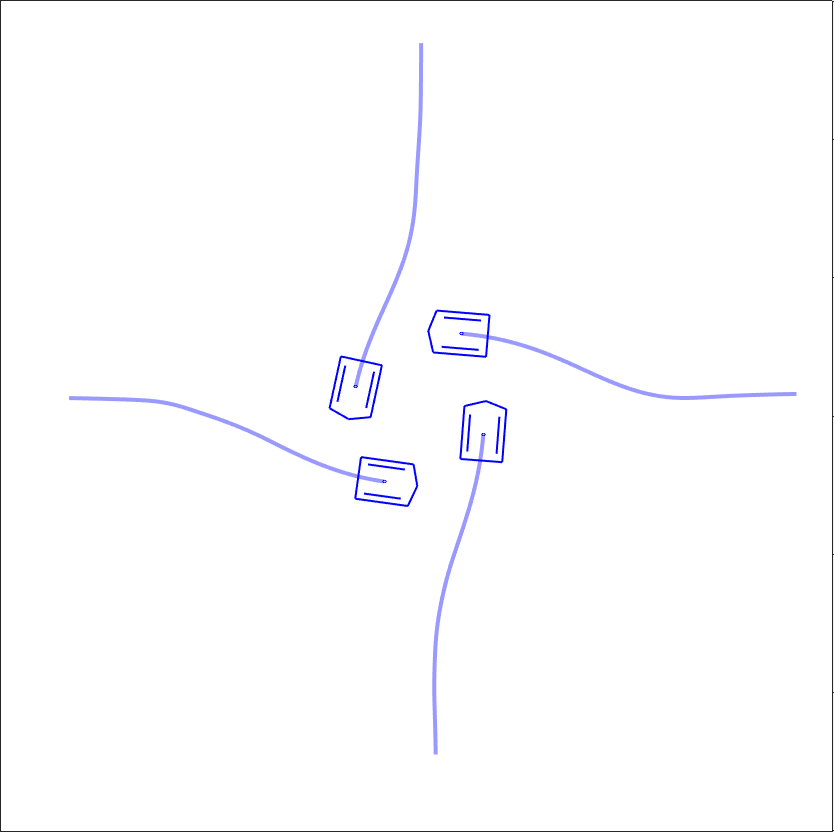}
  \caption{t = 6s}
\end{subfigure}%
\begin{subfigure}{0.24\textwidth}
  \centering
  \includegraphics[width=.85\linewidth]{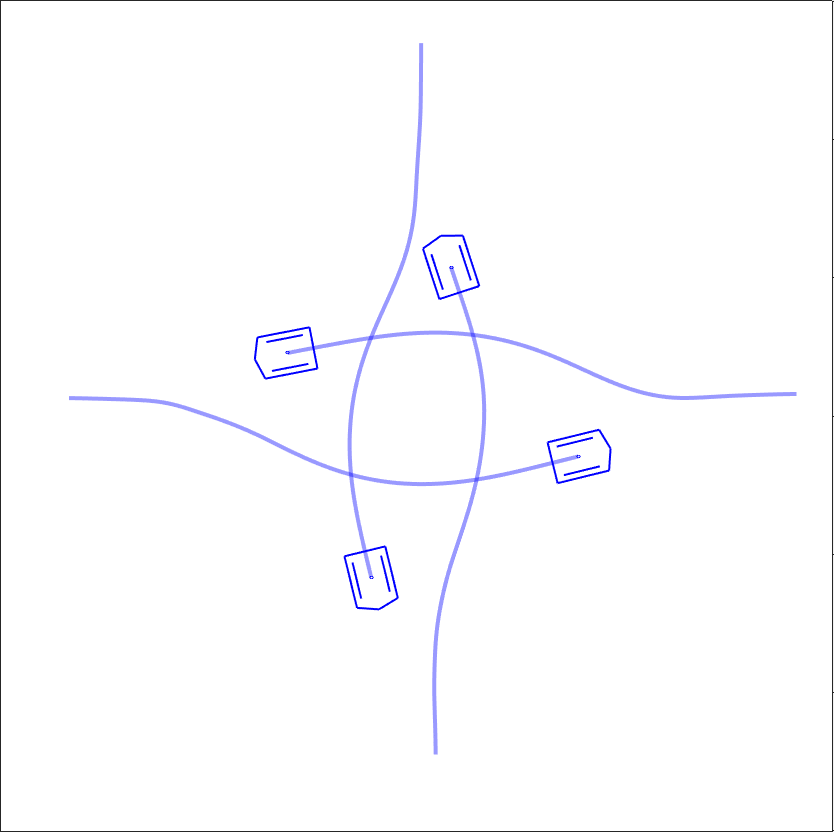}
  \caption{t = 8s}
\end{subfigure}
\begin{subfigure}{0.24\textwidth}
  \centering
  \includegraphics[width=.85\linewidth]{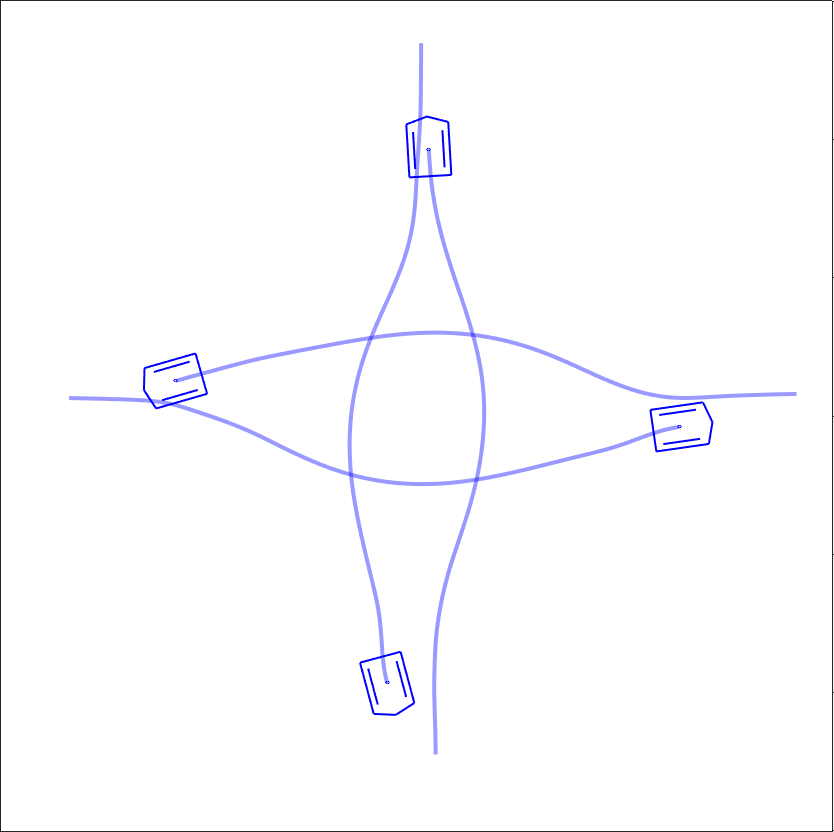}
  \caption{t = 10s}
\end{subfigure}
\caption{ {\bf Multi-Agent Benchmark}: We illustrate a scenarios with four non-circular agents arranged on the perimeter of the circle moving to their diagonally opposite position. The proposed cost function aids in navigation the robot safely in this multi-agent scenario. DS-MPEPC can generate smooth and deadlock-free paths in this scenarios.}
\label{fig:circlescenario}
\end{figure*}

\begin{figure*}
\centering
\begin{subfigure}{0.19\textwidth}
  \centering
  \includegraphics[width=.95\linewidth]{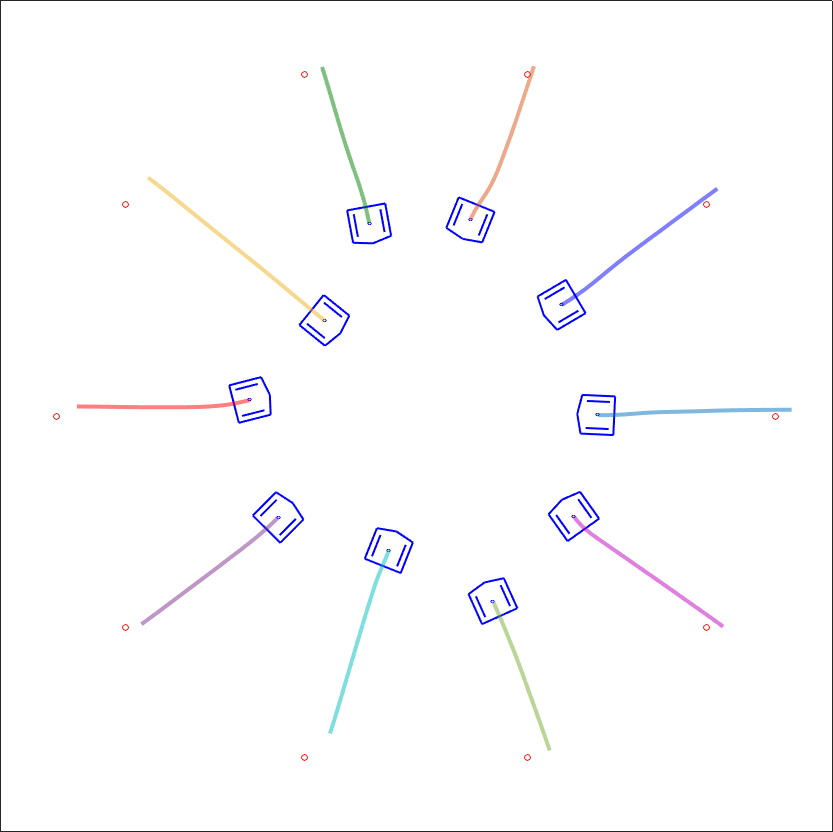}
  \caption{t = 5s}
\end{subfigure}
\begin{subfigure}{0.19\textwidth}
  \centering
  \includegraphics[width=.95\linewidth]{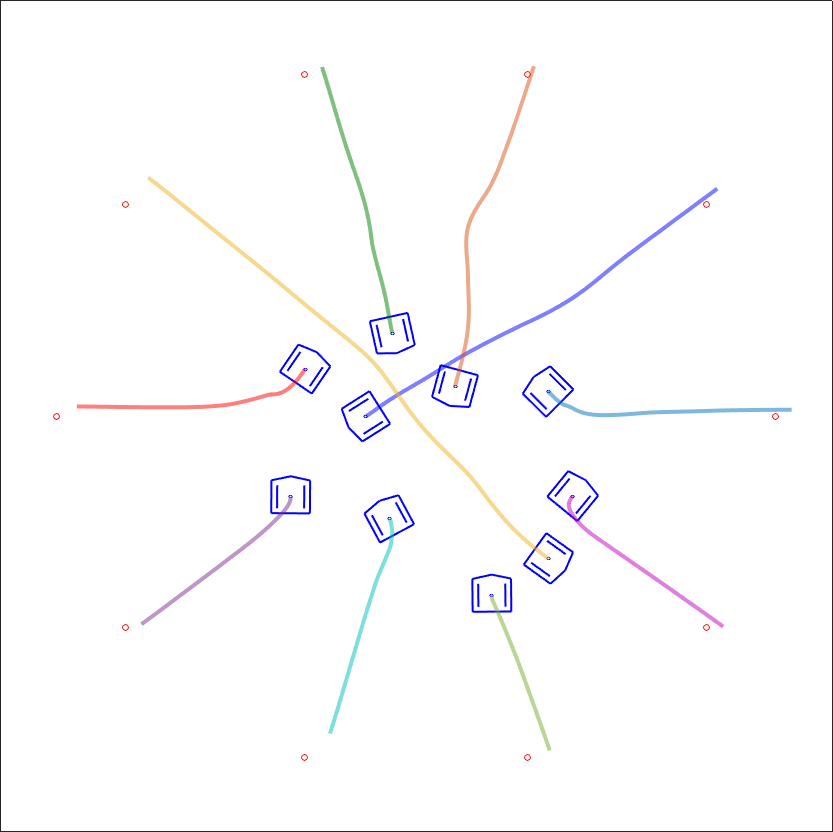}
  \caption{t = 10s}
\end{subfigure}
\begin{subfigure}{0.19\textwidth}
  \centering
  \includegraphics[width=.95\linewidth]{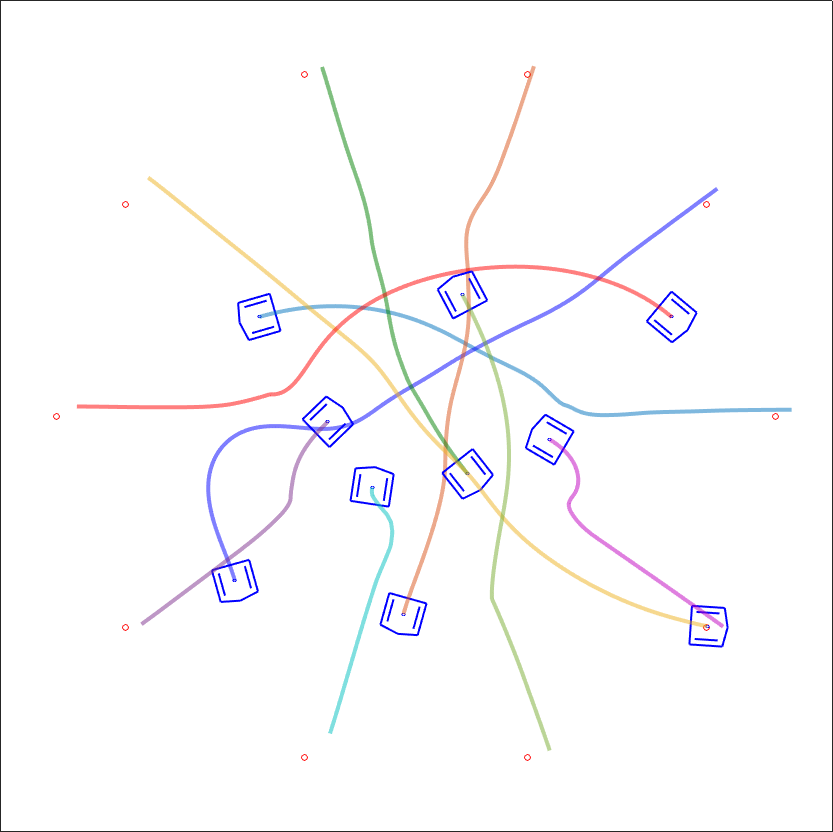}
  \caption{t = 15s}
\end{subfigure}
\begin{subfigure}{0.19\textwidth}
  \centering
  \includegraphics[width=.95\linewidth]{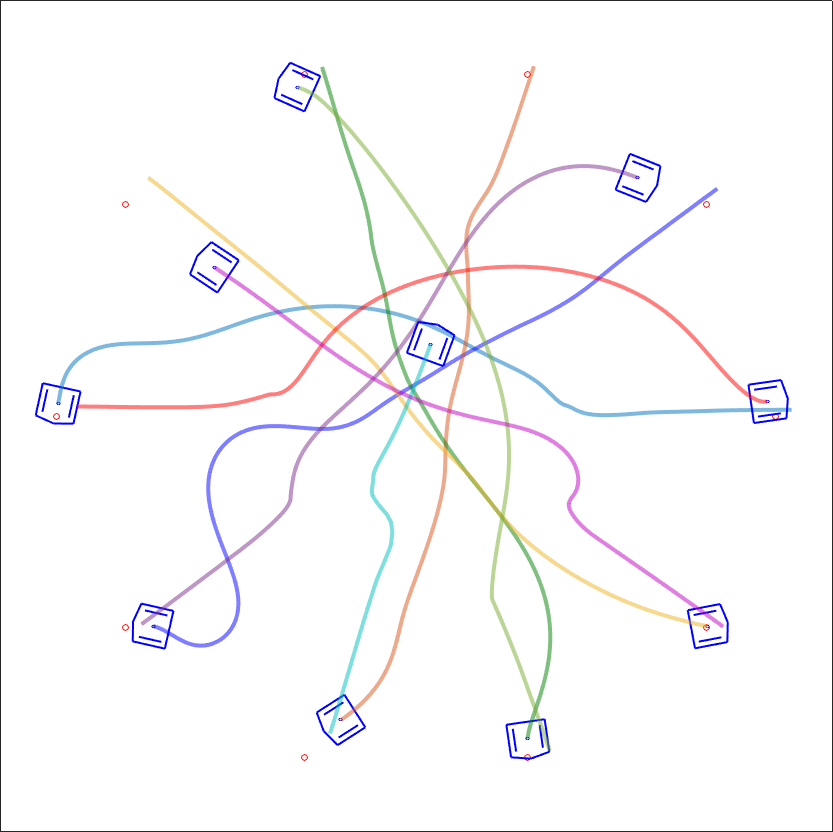}
  \caption{t = 20s}
\end{subfigure}
\begin{subfigure}{0.19\textwidth}
  \centering
  \includegraphics[width=.95\linewidth]{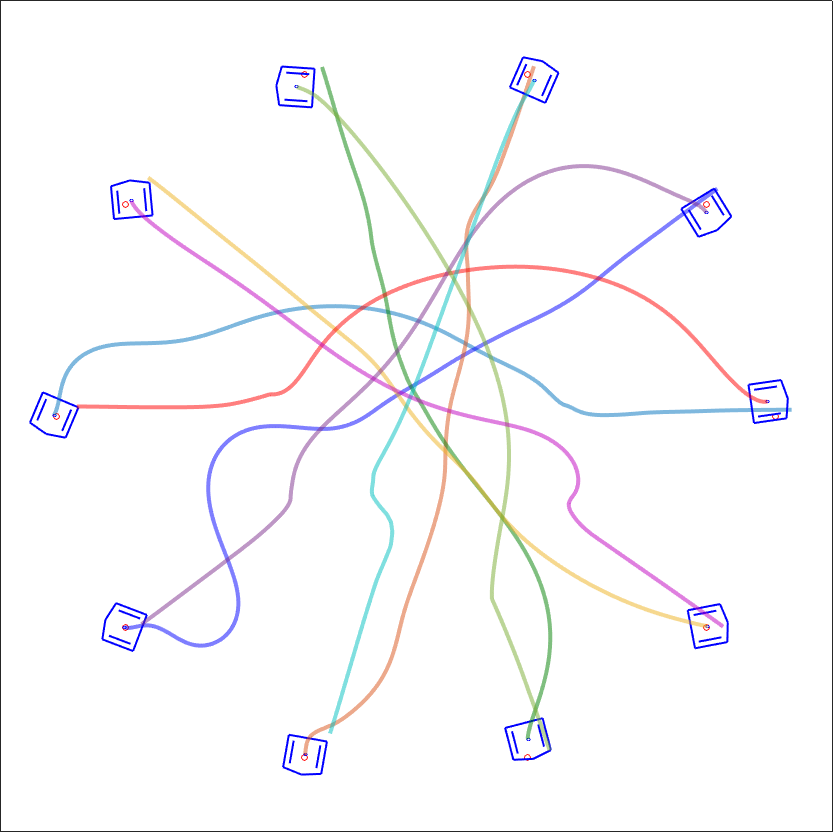}
  \caption{t = 25s}
\end{subfigure}
\caption{ {\bf Multi-Agent Benchmark}: We illustrate a scenarios with ten robots arranged on the perimeter of the circle moving to their diagonally opposite position. The proposed cost function aids in navigation the robot safely in this multi-agent scenario. DS-MPEPC can generate smooth and deadlock-free paths in this scenarios.}
\label{fig:circlescenario10}
\end{figure*}

\subsection{Evaluation Setup}
Our proposed method is implemented over the MPEPC~\cite{park_mpepc} navigation framework. Our evaluations are run in a MATLAB simulation a laptop running a $2.7$ GHz Quad-Core Intel i7 processor. Moreover, we also test the method by dynamically simulating them on environments generated from real data as in~\cite{park_mpepc}. For our evaluations, the planner uses a receding horizon of $T = 5$s with a timestep of $0.2$s. The weight parameter $a$ in collision probability ($\tilde{p}_{c_i}$) is set to $0.7$ for the evaluation. For $J_{terminal}$ computation, the $\sigma_{1/TTG} = 10^{-3}$ and $\sigma_{1/TTC} = 0.5$ are used. 

\subsection{Navigation Behavior}
We evaluate our cost function in multiple complex scenarios in simulation. In particular, we consider three indoor environments: First, a hall environment with multiple pedestrians and static obstacles, an L-shaped corridor, and a T-shaped corridor. The hall and L-shaped environments are based on real data traces, which are used to create the static obstacles and pedestrian trajectories in the simulation scenario. In the hall scenario, we increase the pedestrians from the four available pedestrian trajectories by spatially moving the pedestrian trajectories to a different portion of the environment to create additional pedestrians.

The hall environment consists of multiple pedestrians (denoted by red disks) and static obstacles (black regions). The navigation scenario involves the robot following a selected pedestrian (green disk) while avoiding collisions with other pedestrians and static obstacles. Figure~\ref{fig:hall} shows the resulting trajectory generated by our cost function.

The L-shaped corridor involves the robot following a pedestrian (green disk) into a narrow corridor. Figure~\ref{fig:lcorridor} shows the resulting trajectory generated by our cost function. We observe the robot successfully maneuvering and entering the narrow passage to follow the moving target.

The T-shaped corridor environment involves a robot turning into a corridor with the other robot staying stationary and obstructing the path. This scenario has a stationary robot blocking the moving robot, and the MPEPC cost formulation deadlocks the agent. The deadlock occurs as the halting trajectory makes the most progress while remaining safe. Our cost modification helps the agent to detour and move around the obstruction to reach the goal. Figure~\ref{fig:tcorridor} shows the resulting trajectory generated by MPEPC and our modified cost function.

\subsection{Multi-Agent Scenario}
We observe our proposed cost function was able to generate multi-agent navigation behavior in cluttered scenarios using non-circular agents. In this subsection, we evaluate our cost function in two multi-agent scenarios. In the first scenario, we consider two agents navigating a narrow corridor in opposing directions. In this particular test case (Figure~\ref{fig:narrowcorridor}), we observe the MPEPC cost to lead the robot to a deadlock, and our modified cost function navigates the robots safely.

Second, we consider a circle scenario with four-agent and ten-agents. The robots are initially on the boundary and move towards the diagonally opposite location. Figures~\ref{fig:circlescenario} and~\ref{fig:circlescenario10} illustrate the resulting trajectories in this scenario. We observe the planner navigates the robot safely and maintains a safe distance between the agents.

\subsection{Performance}
The proposed cost function involves computing the time-to-collision and time-to-goal values which are fast to compute. The optimization problem is similar to MPEPC formulation and is suitable for real-time navigation performance.
\section{Conclusion, Limitation, and Future Work}

In this paper, we consider the finite horizon trajectory planning problem for robot navigation and presented a trajectory cost function for robot navigation. Our approach extends the MPEPC navigation algorithm and considerably improves the performance in terms of avoiding deadlocks or freezing behaviors. Particularly, our proposed collision probability formulation is a function of the obstacle distance and the time to collision and is less conservative in terms of collision evaluation of the trajectory. In addition, we also propose a terminal state cost function using the time-to-goal and time-to-collision values which aid in reducing the deadlock in our evaluation scenarios. We evaluated the proposed cost function in a variety of scenarios and generates impressive navigation behavior. The overall algorithm is fast and simple and provides realtime performance in our benchmarks.

Our approach has some limitations. Some of them arise from the underlying optimization framework used for MPC or MPEPC. Though our modified cost shows improved deadlock-reducing behavior in the evaluated scenarios, it can still cause locally optimal behavior due to the finite horizon optimization. In future work, we plan to perform more evaluation of our cost function on complex simulated environments and physical robots. Besides, we plan to evaluate the terminal state cost to study the effects of the parameters $\sigma_{1/TTC}$ and $\sigma_{1/TTG}$ on the navigation and deadlocking behavior. We would like to evaluate the performance in complex real-world and synthetic scenarios.


\bibliographystyle{IEEEtran}
\bibliography{IEEEabrv,mybibfile}

\end{document}